\documentclass[letterpaper]{article} % DO NOT CHANGE THIS
\usepackage{aaai25}  % DO NOT CHANGE THIS
\usepackage{times}  % DO NOT CHANGE THIS
\usepackage{helvet}  % DO NOT CHANGE THIS
\usepackage{courier}  % DO NOT CHANGE THIS
\usepackage[hyphens]{url}  % DO NOT CHANGE THIS
\usepackage{graphicx} % DO NOT CHANGE THIS
\usepackage{natbib}  % DO NOT CHANGE THIS AND DO NOT ADD ANY OPTIONS TO IT
\usepackage{caption} % DO NOT CHANGE THIS AND DO NOT ADD ANY OPTIONS TO IT
\usepackage{algorithm}
\usepackage{algorithmic}
\usepackage{ragged2e}
\usepackage{blindtext}

\usepackage{latexsym}
\usepackage{amssymb}
\usepackage{amsmath}
\usepackage{amsthm}
\usepackage{booktabs}
\usepackage{enumitem}
\usepackage{graphicx}
\usepackage{color}

\usepackage{times}
\usepackage{soul}
\usepackage{url}
\usepackage[utf8]{inputenc}
\usepackage{caption}
\usepackage{graphicx}
\usepackage{amsmath}
\usepackage{booktabs}
\usepackage[justification=centering]{caption}

\def\R{{\mathbb{R}}}

\def\E{{\mathbb E}}
\def\X{{\mathcal X}}

\def\A{{\mathcal A}}
\def\H{{\mathcal H}}

\def\D{{\mathcal D}}

\usepackage{algorithmic,algorithm}

\newtheorem{theorem}{Theorem}

\newcommand{\BibTeX}{B\kern-.05em{\sc i\kern-.025em b}\kern-.08em\TeX}

\DeclareMathOperator*{\argmax}{arg\,max}
\def\nys{Nystr{\"o}m }

\usepackage{newfloat}
\usepackage{listings}
\DeclareCaptionStyle{ruled}{labelfont=normalfont,labelsep=colon,strut=off} % DO NOT CHANGE THIS
\floatstyle{ruled}
\newfloat{listing}{tb}{lst}{}
\floatname{listing}{Listing}
\title{Bridging the Gap Between Hyperdimensional Computing and Kernel Methods via the \nys Method}
\author{
    Quanling Zhao,
    Anthony Hitchcock Thomas,
    Ari Brin,
    Xiaofan Yu,
    Tajana Rosing
}
\affiliations{
Department of Computer Science and Engineering, UC San Diego\\
\{quzhao,ahthomas,abrin,x1yu,tajana\}@ucsd.edu
}

\usepackage{bibentry}
\begin{document}

\maketitle

\begin{abstract}
Hyperdimensional computing (HDC) is an approach from the cognitive science literature for solving information processing tasks using data represented as high-dimensional random vectors. The technique has a rigorous mathematical backing, and is easy to implement in energy-efficient and highly parallel hardware like FPGAs and ``processing-in-memory'' architectures. The effectiveness of HDC in machine learning largely depends on how raw data is mapped to high-dimensional space. In this work, we propose NysHD, a new method for constructing this mapping that is based on the \nys method from the literature on kernel approximation. Our approach provides a simple recipe to turn any user-defined positive-semidefinite similarity function into an equivalent mapping in HDC. There is a vast literature on the design of such functions for learning problems. Our approach provides a mechanism to import them into the HDC setting, expanding the types of problems that can be tackled using HDC. Empirical evaluation against existing HDC encoding methods shows that NysHD can achieve, on average, 11\% and 17\% better classification accuracy on graph and string datasets respectively.

\end{abstract}

% Uncomment the following to link to your code, datasets, an extended version or similar.

 \begin{links}
     \link{Code}{https://github.com/QuanlingZhao/NysHD}
%     \link{Datasets}{https://aaai.org/example/datasets}
%     \link{Extended version}{https://aaai.org/example/extended-version}
 \end{links}

\section{Introduction}
\label{intro}
Biological brains “compute” using data representations that are intrinsically fault-tolerant, suitable for highly-parallel circuitry, and reveal complex structures in an environment that are easy to learn \cite{hertz2018introduction}. Motivated by these desirable qualities, hyperdimensional computing (HDC) builds on theories of representation from cognitive science \cite{kanerva2009hyperdimensional,plate1995holographic} to develop novel hardware and algorithms for information processing tasks.
In HDC, all computation is performed using high-dimensional, low-precision, vector representations of data. These representations can be manipulated using simple, element-wise operators, so as to implement learning algorithms or other information processing tasks.

In contrast to deep learning models, training HDC-based models can typically be done in a single pass over the training data \cite{hernandez2021onlinehd,yu2022understanding} and  does not require back-propagation. The operations used in HDC are lightweight and highly parallelizable, making them suitable for implementation on low-energy and parallel hardware platforms.  This makes HDC an attractive alternative for implementing learning in resource constrained settings. As a result, HDC has gained significant interest in recent years, especially in Internet of Things (IoT)~\cite{khaleghi2022generic,zhao2022fedhd,morris2021hydrea}; and in the computer hardware community~\cite{dutta2022hdnn,kang2022xcelhd} such as FPGAs~\cite{salamat2019f5}, GPUs~\cite{kang2022openhd}, ASICs~\cite{zhang2023accelerating} and in-memory computing~\cite{dutta2022hdnn,xu2023fsl}.

The first stage in any HDC task is encoding, which maps data from its ambient representation $x \in \mathcal{X}$, into a representation $\phi(x)$ residing in a high-dimensional inner product space $\mathcal{H}$ \cite{kleyko2023survey}. HDC uses addition and multiplication (called `bundling'' and ``binding'' in the HDC literature), to build representations of complex structures from simple building blocks or to implement tasks like learning. For instance, classification using HDC represents each class as a sum of the encodings of its training data, called a ``prototype.'' Inference can then be performed by finding the closest prototype to a query.

The crucial assumption underlying the success of this technique is that ``similar'' points in $\mathcal{X}$ are mapped to ``similar'' regions of $\mathcal{H}$. In practice, this desideratum typically means that dot-products in $\mathcal{H}$ should be reflective of some salient notion of similarity on $\mathcal{X}$. In HDC, one typically builds representations incrementally, by bundling and binding together the embeddings of simpler atoms. In this work, we observe that one can also go in the opposite direction: starting from a known similarity function of interest, it is possible to generate an equivalent---up to some approximation factor---encoding function.

The advantage of this ``top down'' approach is that there is a vast literature on designing good similarity functions for different kinds of learning problems. In machine learning, similarity functions that work by computing inner products between high-dimensional embeddings of data are called kernel functions, and are the basis of kernel methods, a vast area of research in theoretical and applied ML \cite{shawe2004kernel,smola1998learning}. This literature has devoted substantial attention to the problem of designing good similarity functions, which are also potentially applicable to the kinds of problems encountered in HDC. In this work, we study an approach, based on the \nys method from the literature on kernel approximation \cite{williams2000using}, which can take a user defined kernel and generate an low-precision, randomized embedding, suitable for use in the kinds of learning algorithms employed in HDC. In a nutshell, the contributions of this paper are as follows:
\begin{itemize}
    \item We propose NysHD, a new way to generate embeddings for HDC which can turn any user-defined kernel function into an equivalent encodings.
    \item We analyze the kernel-preserving properties of NysHD formally and show that the inner product between encoded samples preserves normalized kernel values.
    \item We perform an empirical evaluation against existing HDC encoding methods and various neural network architectures and show our method substantially improves the performance of HDC-based learning, while preserving efficiency benefits relative to DNNs.
    \end{itemize}
From a practical standpoint, our work has the potential to expand the scope of tasks that can be effectively addressed using HDC by allowing practitioners to access the large repertoire of kernels that have been designed for them (i.e. graph kernels, string kernels etc.).

\section{Background and Related Work}

\subsection{Learning With HDC}
\label{2.1}

\begin{figure}[t]
  \centering
  \captionsetup{justification=centering}
\includegraphics[width=0.5\textwidth]{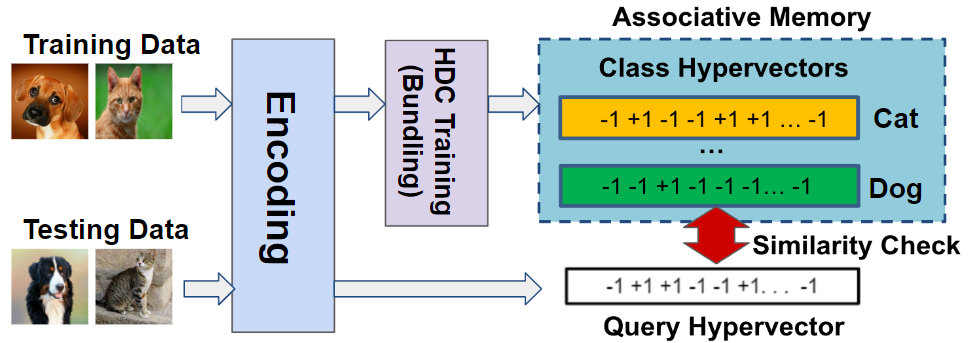}
  \caption{Overview of HDC training and inference for classification tasks.}
  \label{fig:hd_overview}
\end{figure}

In this work, we focus on using HDC to solve classification problems, which is a common practical application of the technique \cite{imani2019bric,rahimi2018efficient,menon2022highly}.
Fig~\ref{fig:hd_overview} demonstrates a typical HDC learning workflow for classification~\cite{morris2021hydrea,nunes2022graphhd,miranda2022hyperdimensional}. Let $\D = \{(x_{1},y_{1}),...,(x_{n},y_{n})\}$ be a set of training data, where $x_{i} \in \X$ is an input, and $y_{i} \in \{1,...,c\}$ is a class label. The first step is to embed the training data into a $d$-dimensional inner product space $\H$ under a map (called the encoding function) $\phi : \X \to \H$. The ``training'' step then associates each class with a vector in $\H$, which is typically formed by summing (bundling) the training data corresponding to a particular class. Specifically, class $j$ is represented as $\theta_{j} = \sum_{i=1}^{n}\alpha_{ij}\phi(x_{i})$
where $\alpha_{ij} = {1}(y_{i} = j)$. In practice, $\theta_{j}$ is sometimes quantized to reduce precision in some fashion, which is beneficial in some hardware settings. The label of a query $x$ is predicted by via $\hat{y} = \argmax_{j \in \{1,...,c\}} \phi(x) \cdot \theta_{j}$
where the operands are sometimes normalized if appropriate. In either case, this procedure can be interpreted as associating each class to a linear scoring function in $\H$ and performing inference by picking the class of highest score - a common paradigm in machine learning. Fine-tuning the class vectors $\theta_{j}$ is common, often achieved by running the Perceptron algorithm \cite{rosenblatt1958perceptron}, referred to as “re-training” in HDC literature.

%Concretely, given a misclassified sample $x$, the class prototype can be fine tuned via the following equation:
%\begin{equation}
%\begin{split}
%    \theta_{y} &= \theta_{y} + \phi(x) \\
%    \theta_{\hat{y}} &= \theta_{\hat{y}} %- \phi(x)
%\end{split}
%\end{equation}

A number of HDC encoding methods have been proposed to encode different types of data. For example, string or text data can be encoded through the $N$-gram encoding method~\cite{joshi2017language,imani2018hdna}. Concretely, let $S = (a_{1}a_{2}a_{3}...a_{N})$ be a string of $N$ characters drawn from some alphabet $\A$ (say, the latin alphabet or $\{\texttt{A,T,G,C}\}$). To encode $S$, we start by assigning each $a \in \A$ an embedding $\phi(a)$ by sampling uniformly at random from  $\{\pm 1/\sqrt{d}\}^{d}$, after which, we represent $S$ as $a_1 \circ \rho(a_2) \circ \rho^2(a_3) \circ \dots \circ \rho^{N-1}(a_N)$ where $\circ$ is element-wise multiplication~\cite{joshi2017language} and $\rho$ is a permutation operation of the vector coordinates, $\rho^n$ means same permutation applied $n$ times in sequence. If $\circ$ is multiplication, the way $N$-grams representations are constructed makes them almost mutually orthogonal in $\mathcal{H}$ in the sense that $\E[\phi(S)\cdot\phi(S')] = 1(S = S')$. The deviation from this expectation can be controlled using concentration arguments \cite{thomas2021theoretical}.

Finally, longer strings that contain multiple $N$-grams can be treated as a sum of $N$-gram vectors. This string encoding scheme can be viewed as a ``compressed'' version of the bag-of-words model~\cite{harris1954distributional} in the sense that all $N$-gram vectors are superimposed into one representation. Intuitively, the inner product between two such encoded strings can measure how ``similar'' two strings are, as that value would be large if two strings shared many common $N$-grams, and vice versa. For general feature vectors (or simple images), techniques based on random projection are popular~\cite{morris2021hydrea,khaleghi2022generic}. 

Existing HDC encoding methods tend to capture fairly simple notions of similarity based on the L1/L2 or angular distance. However, the design of good encoding functions for complex forms of data like graphs and time-series remains an important area of research. For inspiration, we turn to another area of machine learning that has thought extensively about how to measure similarities between data points using high-dimensional vectors.

\subsection{Kernel Methods}
\label{kernel_methods}

Kernel methods are a wide ranging area of research in statistics and machine learning that shares many similarities with HDC~\cite{shawe2004kernel,meanti2020kernel,hofmann2008kernel}. Much like in HDC, kernel methods work by embedding data into a high-dimensional space wherein similarities are measured using inner products. That is to say, kernel methods measure similarities between data points $x,x' \in \mathcal{X}$ via a function $K(x,x') = \psi(x) \cdot \psi(x')$, called a ``kernel function,'' where $\psi : \X \to \H$ is an embedding into an inner product space. For many types of kernel functions used in practice, it is possible to compute $K(x,x')$ directly on the ambient representation of the data without materializing the embeddings. Notable examples include the Gaussian kernel $K(x,x') \propto \exp(-\|x-x'\|_{2}^{2})$, and the $p$-th order polynomial kernel $K(x,x') = (1 + x \cdot x')^{p}$. Both of these kernels can be evaluated in closed form on the ambient representation of the data, allowing kernel methods to \emph{implicitly} compute a similarity based on a high-dimensional embedding. HDC, however, always explicitly  materializes the embeddings, hence the need for an encoding function $\phi$.

Kernel-based learning methods make predictions using functions taking the form $f(x) = \sum_{i=1}^{n} 
a_i k(x_{i},x)$, where $x_{1},...,x_{n}$ are training data points, and $\alpha_{1},...,\alpha_{n}$ are weights that are learned by a training algorithm. Noting that $f(x) = \sum_{i=1}^{n}\alpha_{i}k(x,x_{i}) = \psi(x)\cdot\theta$ where $\theta = \sum_{i=1}^{n} a_i \psi(x_i)$, in this way we can interpret such functions as \emph{linear models} in the embedding space associated with the kernel, much like in the previous paragraph on HDC. One significant difference between kernel methods and HDC, is that in the former the embeddings are implicit, and similarities are evaluated using the kernel function. This property is appealing because it allows one to efficiently work with infinite-dimensional embeddings, which can have desirable properties for learning~\cite{steinwart2001influence}.

\subsection{Kernel Methods and HDC}
\label{kernel_methods_HDC}

There is a large body of theoretical and applied literature on kernel methods that has developed kernel functions applicable to many settings of practical interest~\cite{neumann2016propagation,leslie2001spectrum,shimodaira2001dynamic,shawe2004kernel}. To provide a concrete example of how the literature on kernel methods can offer insights for the HDC community, we first consider the encoding of time-series data in HDC. Similar to the previously discussed $N$-gram encoding, the permutation operation is applied to encode the temporal information of time-series data~\cite{joshi2017language,asgarinejad2020detection}. However, such encoding schemes can fail if the events in two time-series do not align exactly. Since each time step is associated with a unique permutation during encoding, even a small shift in events between time-series can cause existing HDC encoding methods to map the two time-series to nearly orthogonal vectors. In practice, however, if two time-series reflect the same underlying activity or nature, one would expect their similarity to be preserved after encoding, even if some event misalignment exists. On this issue, the literature on kernel methods suggests a solution: the dynamic-time-warping kernel~\cite{gudmundsson2008support}, which can handle time-series sequences with misalignment or time-stretching/compression.

For graphs, GraphHD~\cite{nunes2022graphhd} proposes to encode graph topology induced by PageRank centrality metric~\cite{brin1998anatomy}. However, this process does not utilize crucial information such as node labels or node attributes (where each node in the graph is associated with a feature vector or a label). Such limitations have been addressed by kernel methods. For example, the propagation kernell~\cite{neumann2016propagation} works with graphs that include node labels or node attributes, therefore is capable of capturing a potentially richer notion of similarity.

In this work, our goal is to devise a procedure that can translate any kernel into an equivalent HDC encoding, thereby allowing practitioners to exploit the wealth of kernel functions that have been designed for practical problems while continuing to reap the benefits of computing with HDC representations.

%The propagation kernel uses random walk techniques to measure not only the structural difference between graphs but also taking node labels or node attributes into consideration as well, therefore is capable of capturing a potentially richer notion of similarity.

\subsection{Related Work}

The connection between HDC and kernel approximation is generally well known~\cite{yu2022understanding,thomas2021theoretical,paxon2021computing,voelker2020short}, mostly through the lens of random Fourier features (RFF)~\cite{rahimi2007random}. RFF is a sampling based scheme that generates a vector of features $\phi(x) \in \R^{d}$ with the property that $\phi(x) \cdot \phi(x') \approx K(x,x')$, where $K$ is a shift-invariant kernel (kernel function that depends only on the relative distance between inputs, e.g. the Gaussian kernel and Laplacian kernels). Closely related methods arise in the HDC literature under the names ``nonlinear-encoding'' \cite{miranda2022hyperdimensional,imani2020dual} and ``fractional power encoding'' \cite{paxon2021computing}. Using RFF in the context of HDC means that inner products in HD space approximate some shift-invariant kernel, usually the Gaussian or Sinc kernels. However, a limitation of RFF is that it can only work with shift-invariant kernels, which many useful kernels do not satisfy, such as kernels on graphs and strings. The \nys method provides a way to generate approximations for a larger class of kernels that do not need to be translation invariant.

% More recently, Cyclic Group Representation~\cite{yu2022understanding} has also explored the similarity-preserving properties of encodings. Specifically, the authors demonstrate that HDC encodings can be generated directly from a desired pairwise similarity matrix. However, their approach does not establish an explicit connection to kernel methods and is demonstrated entirely on Euclidean data, leaving room for generalization. Additionally, the method requires the computation and eigendecomposition of the entire similarity matrix, which can be computationally prohibitive in practice.

%More recently, the work by Yu et al.~\cite{yu2022understanding} proposed a similar scheme based on the decomposition of a predefined similarity matrix. Specifically, the authors demonstrated that HDC encodings can be generated directly from a desired pairwise similarity matrix. However, their approach does not establish an explicit connection to the Nyström method, a widely used technique in kernel approximation. Consequently, their method requires the computation and eigendecomposition of the entire similarity matrix, which can be computationally prohibitive in practice. Furthermore, it remains unclear whether their approach produces encodings that effectively approximate the underlying kernel. In this work, we demonstrate that our approach generates encodings that approximate the underlying kernel in a computationally efficient manner.

\section{HDC Encoding via \nys Approximation}

In this section, we describe our new encoding algorithm, NysHD, for HDC using the \nys method for kernel approximation.
We also demonstrate that the inner product between encoded samples, in expectation, preserves normalized kernel values.

Given a suitable kernel function $K$, our goal is to generate an encoding function $\phi: \mathcal{X} \rightarrow \mathcal{H}$ such that $\phi(x) \cdot \phi(x') \approx K(x,x') \quad \forall x,x' \in \mathcal{D}$, where $\mathcal{D}$ is some subset of $\mathcal{X}$. This property is useful for learning algorithms that are widely employed in HDC as it enables them to exploit more useful similarities captured by the kernel.

\subsection{\nys Method}

Conventional realizations of kernel-based learning algorithms commonly require storing all pairwise evaluations of the kernel function in a large matrix $G$ defined element-wise by $G_{ij} = K(x_{i},x_{j})$, which is problematic when $n$ is large. The \nys method is a low-rank matrix approximation technique widely employed to speed up kernel machines by avoiding the need to store the entire kernel matrix~\cite{williams2000using, drineas2005nystrom, kumar2012sampling}. In this way, the \nys method is similar to RFF since they are both sampling-based schemes and can be used to approximate kernel functions. The key difference between RFF and the \nys method is that the \nys method can work with a larger class of kernels than RFF, many of which are useful for applications involving discrete structures such as string and graph.

Intuitively, the \nys method works by sub-sampling the kernel matrix and reconstructing the full kernel matrix from the sampled one. This is possible because the kernel matrix is typically close to low-rank in practice. Concretely, suppose we have a dataset $\mathcal{D} = \{x_1, x_2, ... ,x_n\}$, from which we sample a set of landmarks $\mathcal{Z} = \{z_{1},...,z_{s}\}$, where $s \ll n$. Let $G \in \mathbb{R}^{n \times n}$ be the full kernel matrix defined element-wise by $G_{ij} = K(x_{i},x_{j})$, and let $H_{\mathcal{Z}} \in \mathbb{R}^{s \times s}$ be the sub-sampled kernel matrix defined element-wise by $(H_{\mathcal{Z}})_{ij} = K(z_{i},z_{j})$.
The \nys method yields the following approximation~\cite{drineas2005nystrom}:
\begin{equation}
    \hat{G} = C H_{\mathcal{Z}}^{+} C^T \approx G
    \label{eq:nys}
\end{equation}
Where $C$ is an $n \times s$ matrix such that $C_{ij} = K(x_i,z_j)$ for some kernel function $K$ and $H_Z^{+}$ denotes the pseudo-inverse of $H_{\mathcal{Z}}$. There is a robust theoretical literature on the \nys method, providing bounds on approximation error based on the number of selected landmarks and various sampling strategies~\cite{kumar2012sampling}. Let $Q$ and $\Lambda$ be the eigenvectors and eigenvalues of $H_{\mathcal{Z}}$ then:
\[
    H_Z^{+} = Q \Lambda^{-1} Q^T 
\]
This allows one to generate encodings explicitly via $\phi_{\text{nys}}(x_i) = \Lambda^{-\frac{1}{2}} Q^{T} C^{(i)}$, which approximates the kernel:
\begin{equation}
\begin{split}
    \label{nys_approx}
    & \phi_{\text{nys}}(x_i) \cdot \phi_{\text{nys}}(x_j) \\
    &= \left( \Lambda^{-\frac{1}{2}} Q^{T} C^{(i)} \right) \cdot \left( \Lambda^{-\frac{1}{2}} Q^{T} C^{(j)} \right) = \hat{G}_{ij}
\end{split} 
\end{equation}
Where $x_i, x_j \in \mathcal{D}$ and $C^{(i)}$ denotes the $i^{th}$ row of $C$ in a column vector. While the encodings produced by \nys method directly satisfy the desired kernel approximation property, they are, in general, of high-precision which is undesirable in some settings of interest in HDC~\cite{khaleghi2022generic}. Our method rectifies this issue by composing the features extracted using the \nys method with another encoding technique that preserves angular similarities which we discuss in detail in the next section.

Due to the data-dependent nature of the \nys method, the quality of its approximation and computational complexity depends on the size and composition of $\mathcal{Z}$. As an initial step, in this paper we simply use uniform sampling without replacement~\cite{williams2000using,kumar2012sampling} as our sampling strategy. However, more sophisticated strategies such as ensemble and adaptive sampling~\cite{kumar2012sampling} for constructing landmark sets can potentially lead to better performance and warrant further exploration in future work.

\subsection{Encoding Process}

\label{encoding}

To achieve the aforementioned goals, we compose random hyperplane rounding~\cite{charikar2002similarity} with the \nys method to generate HDC embeddings that approximate a desired kernel. Alg.~\ref{alg:adaptive} describes the process for generating the \nys embedding matrix, followed by data point encoding in Alg.~\ref{alg:encode}.

\begin{algorithm}
 \caption{Generate the \nys embedding matrix}
 \begin{algorithmic}  
 \REQUIRE kernel $K$ over $\mathcal{X}$, dataset $\mathcal{D}$, number of landmarks $s > 0$, HDC dimension $d > 0$
 \STATE $\mathcal{Z} \leftarrow$ sample $s$ points from $\mathcal{D}$ without replacement \\ \textbf{\textit{/*Landmarks*/}}
 \STATE $(H_{\mathcal{Z}})_{ij} = K(z_i,z_j) 
 \quad \forall \quad 0 \leq i,j \leq s$ \\ \textbf{\textit{/*Partial kernel Matrix over landmarks*/}}
 \STATE $Q \Lambda Q^{T} = H_{\mathcal{Z}}$ \\ \textbf{\textit{/*Symmetric Eigen-decomposition*/}}
 \STATE $P_{\text{rp}}$ = $[ w_1, w_1, \cdots, w_{d} ]^T \in \mathbb{R}^{d \times s}$ \\ \textbf{\textit{/*$w_i$ sampled from s dimensional unit sphere*/}}
 \STATE $P_{\text{nys}} = P_{\text{rp}} \Lambda^{-\frac{1}{2}} Q^{T}$
 \RETURN $P_{\text{nys}}$ , $\mathcal{Z}$
 \end{algorithmic}
 \label{alg:adaptive}
\end{algorithm}

{\begin{algorithm}
 \caption{Encode one data point from $\mathcal{D}$}
 \begin{algorithmic}  
 \REQUIRE $x_i \in \mathcal{D}$, \nys embedding matrix $P_{\text{\text{nys}}}$, Landmarks $\mathcal{Z}$ and kernel function $K$
 \STATE $C^{(i)} = \begin{bmatrix} K(x_i,z_1) & K(x_i,z_2) & \cdots & K(x_i,z_s) \end{bmatrix}^T \in \mathbb{R}^{s}$
 \RETURN $\sqrt{\frac{\pi}{2d}}$ sign $\left( P_{\text{nys}} C^{(i)} \right)$
 \end{algorithmic}
 \label{alg:encode}
\end{algorithm}}

The rows of $P_{\text{rp}} \in \mathbb{R}^{d \times s}$ are sampled from the uniform distribution over the $s$-dimensional unit sphere. This enables the following result regarding sign-thresholded random projection: suppose $v,v' \in \mathbb{R}^n$ are unit vectors, with respect to randomness in the sampling of $P_{\text{rp}}$, the following result holds with randomness in the sampling of $P_{\text{rp}}$ (see for instance \cite{charikar2002similarity}):
\begin{equation}
\begin{split}
  & \mathbb{E} \left[ \frac{1}{d} \text{sign} (P_{\text{rp}}v)\cdot \text{sign} (P_{\text{rp}}v') \right] = (1 - 2\cos^{-1}(v\cdot v')/\pi)
  \label{eq:rp}
\end{split}
\end{equation}
NysHD generates encodings for which the similarity induced by the kernel $K$ is preserved by dot product between the encodings in $\mathcal{H}$. We summarize this result in the following theorem:

\begin{theorem}
\label{main_theorem}

Let $K$ be a positive-definite kernel and, using the notation of Algorithms \ref{alg:adaptive} and \ref{alg:encode}, and recalling that $\phi_{\text{nys}}(x_i) = \Lambda^{-\frac{1}{2}} Q^{T} C^{(i)}$, define:
\[
    \phi(x_i) = \sqrt{\frac{\pi}{2d}} \text{sign}(P_{\text{rp}} \phi_{\text{nys}}(x_i))
\]
Then, for all $x_{i}, x_{j} \in \mathcal{D}$:
\begin{equation}
    \mathbb{E} \left[ \phi \left(x_{i} \right) \cdot \phi\left(x_{j} \right) \right] = \frac{\pi}{2} - \cos^{-1} \left( \frac{\hat{G}_{ij}}{\sqrt{\hat{G}_{ii} \hat{G}_{jj}}} \right)
\end{equation}
Where $\hat{G}_{ij}$ is estimated kernel value between $x_i$ and $x_j$ produced by \nys method. The expectation is taken with respect to randomness in the sampling of landmarks and $P_{\text{rp}}$.

\end{theorem}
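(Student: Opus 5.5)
The plan is to condition on the landmark set $\mathcal{Z}$ and reduce the statement to the sign-thresholded random projection identity \eqref{eq:rp}. The right-hand side of Theorem~\ref{main_theorem} depends on $\mathcal{Z}$ through $\hat{G}$. So I will read the expectation over landmarks as an expectation conditional on $\mathcal{Z}$ (equivalently, take $\mathbb{E}[\,\cdot \mid \mathcal{Z}]$ of both sides). It then suffices to prove the identity for every fixed $\mathcal{Z}$, with the randomness coming only from $P_{\text{rp}}$.

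\emph{Step 1: the Nystr\"om features are well defined and have the right inner products.} Since $K$ is positive-definite, $H_{\mathcal{Z}}$ is positive definite for distinct landmarks. Hence $\Lambda$ has strictly positive diagonal, $\Lambda^{-\frac12}$ exists, and $H_{\mathcal{Z}}^{+}=H_{\mathcal{Z}}^{-1}=Q\Lambda^{-1}Q^T$. Equation~\eqref{nys_approx} then gives $\phi_{\text{nys}}(x_i)\cdot\phi_{\text{nys}}(x_j)=\hat{G}_{ij}$ and $\|\phi_{\text{nys}}(x_i)\|^2=\hat{G}_{ii}$.

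\emph{Step 2: normalize.} For $x_i$ set $u_i=\phi_{\text{nys}}(x_i)/\sqrt{\hat{G}_{ii}}$, which is a unit vector in $\mathbb{R}^s$. Because $\text{sign}(\cdot)$ is invariant under positive scaling, $\text{sign}(P_{\text{rp}}\phi_{\text{nys}}(x_i))=\text{sign}(P_{\text{rp}}u_i)$. Moreover $u_i\cdot u_j=\hat{G}_{ij}/\sqrt{\hat{G}_{ii}\hat{G}_{jj}}$.

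\emph{Step 3: apply \eqref{eq:rp}.} We have
$\phi(x_i)\cdot\phi(x_j)=\frac{\pi}{2}\cdot\frac1d\,\text{sign}(P_{\text{rp}}u_i)\cdot\text{sign}(P_{\text{rp}}u_j)$.
For completeness I would recall why \eqref{eq:rp} holds. The sum is over the $d$ rows $w_k$, which are i.i.d.\ uniform on the sphere, and each summand has expectation $1-2\theta/\pi$ with $\theta=\cos^{-1}(u_i\cdot u_j)$. This follows from Charikar's argument: the event $\text{sign}(w\cdot u_i)\neq\text{sign}(w\cdot u_j)$ has probability $\theta/\pi$ by rotational symmetry in the plane spanned by $u_i,u_j$, and ties occur with probability zero. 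Taking expectations gives
\[
\mathbb{E}[\phi(x_i)\cdot\phi(x_j)\mid\mathcal{Z}]=\frac{\pi}{2}\Bigl(1-\frac{2}{\pi}\cos^{-1}(u_i\cdot u_j)\Bigr)=\frac{\pi}{2}-\cos^{-1}\Bigl(\frac{\hat{G}_{ij}}{\sqrt{\hat{G}_{ii}\hat{G}_{jj}}}\Bigr).
\]

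\emph{Main obstacle.} The expected difficulty is not the computation but the degenerate cases and the interpretation. First, I must ensure $\hat{G}_{ii}>0$, i.e.\ $C^{(i)}\neq 0$. This holds because $\hat{G}_{ii}=C^{(i)T}H_{\mathcal{Z}}^{-1}C^{(i)}$ with $H_{\mathcal{Z}}^{-1}$ positive definite, so it reduces to $x_i$ not being kernel-orthogonal to every landmark. If $x_i$ is itself a landmark this is automatic, since $K(x_i,x_i)>0$; otherwise it has to be assumed, or the statement should be read only where the right-hand side is defined. Second, the landmarks must be distinct, which sampling without replacement guarantees, so that $H_{\mathcal{Z}}$ is invertible. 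Third, the role of the landmark randomness must be stated precisely: the clean statement is the conditional one above. Averaging it over $\mathcal{Z}$ by the tower property gives equality of unconditional expectations only with $\mathbb{E}_{\mathcal{Z}}$ also applied to the right-hand side.
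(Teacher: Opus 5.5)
Your proof is correct and follows essentially the same route as the paper: normalize the Nystr\"om features using $\|\phi_{\text{nys}}(x_i)\| = \sqrt{\hat{G}_{ii}}$, use the invariance of $\text{sign}$ under positive scaling, and apply the sign-thresholded projection identity \eqref{eq:rp}. Your conditioning on $\mathcal{Z}$ and your handling of the degenerate cases ($\hat{G}_{ii}>0$, invertibility of $H_{\mathcal{Z}}$) are refinements the paper leaves implicit; its proof also treats the landmarks as fixed, which is the only reading under which the identity makes sense, since the right-hand side depends on $\mathcal{Z}$.
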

\textbf{Proof:} Let $\bar{\phi}_{\text{nys}}(x_i)$ denote normalization: $\frac{\phi_{\text{nys}}(x_i)}{||\phi_{\text{nys}}(x_i)||}$. Noting that $ \text{sign} (c v) = \text{sign} (v) $ for any $v$ if $c \geq 0$:
\begin{equation}
\begin{split}
& \phi \left( x_i \right) \cdot \phi \left( x_j \right) \\ 
&= \frac{\pi}{2d} \left( \text{sign} \left( P_{\text{rp}} \phi_{\text{nys
}} \left(x_i \right) \right) \cdot  \text{sign} \left( P_{\text{rp}} \phi_{\text{nys
}}(x_j) \right) \right) \\
& = \frac{\pi}{2} \left( \frac{1}{d} \text{sign} \left( P_{\text{rp}} \bar{\phi}_{\text{nys}}(x_i) \right) \cdot \text{sign} \left( P_{\text{rp}} \bar{\phi}_{\text{nys}}(x_j) \right) \right) \\
\end{split}
\end{equation}
Using result regarding sign-thresholded random projection in equation~\ref{eq:rp}, we have:
\begin{equation}
\begin{split}
    & \mathbb{E} \left[ \phi(x_i) \cdot \phi(x_j) \right] \\
    &= \frac{\pi}{2} \left(1- \frac{2 \cos^{-1}\left({\bar{\phi}_{\text{nys}}(x_i
    )} \cdot \bar{\phi}_{\text{nys}}(x_j) \right) }{\pi} \right) \\
\end{split}
\end{equation}
Recalling the \nys method in equation ~\ref{nys_approx} and the fact that $||{\phi_{\text{nys}}(x_i)}|| = \sqrt{{\phi_{\text{nys}}(x_i)} \cdot {\phi_{\text{nys}}(x_i)}} = \sqrt{\hat{G}_{ii}}$:
\begin{equation}
\begin{split}
    \bar{\phi}_{\text{nys}}(x_i) \cdot \bar{\phi}_{\text{nys}}(x_j) = \frac{\hat{G}_{ij}}{\sqrt{\hat{G}_{ii} \hat{G}_{jj}}}
\end{split}
\end{equation}
Finally:
\begin{equation}
\begin{split}
    \mathbb{E} \left[ \phi(x_i) \cdot \phi(x_j) \right] &= \frac{\pi}{2} - \cos^{-1} \left( \frac{\hat{G}_{ij}}{\sqrt{\hat{G}_{ii} \hat{G}_{jj}}} \right) \blacksquare 
\end{split}
\end{equation}

To make the relationship with kernel approximation more explicit, consider the first order Taylor expansion of $\cos^{-1}$. Since $||\bar{\phi}_{\text{nys}}(x_i)|| = ||\bar{\phi}_{\text{nys}}(x_j)|| = 1$, it follows that $-1 \leq {\bar{\phi}_{\text{nys}}(x_i)} \cdot \bar{\phi}_{\text{nys}}(x_j) \leq 1$, which means $\frac{\hat{G}_{ij}}{\sqrt{\hat{G}_{ii} \hat{G}_{jj}}}$ is within the domain of $\cos^{-1}$, So:
\begin{equation}
\begin{split}
    \mathbb{E} \left[ \phi(x_i) \cdot \phi(x_j) \right] &\approx \frac{\pi}{2} - \left( \frac{\pi}{2} - \frac{\hat{G}_{ij}}{\sqrt{\hat{G}_{ii} \hat{G}_{jj}}} \right) \\
    & = \frac{\hat{G}_{ij}}{\sqrt{\hat{G}_{ii} \hat{G}_{jj}}}
\end{split}
\end{equation}

Note that this is called normalized kernel~\cite{ah2010normalized}.
As shown above, NysHD preserves the kernel in $\mathcal{H}$ (up to the first order approximation) in the sense that the inner product between any pair of encoded data points approximates the normalized kernel value of some user-defined kernel $K$. 
Since HDC relies on the inner product in $\mathcal{H}$ for inference, as described in Section ``Learning with HD''~\ref{2.1}, the similarity metric captured by the kernel $K$ is preserved, which benefits subsequent HDC learning algorithms.

\subsection{Encoding Complexity}

\label{complexity}
The proposed encoding method differs from existing HDC methods because generating embeddings using the \nys method requires computing a partial kernel matrix. For this reason, similar to kernel machines, the efficiency of our algorithm depends heavily on the number of kernel computations, which is related to the size of landmarks set $s$.

In this section, we provide an asymptotic runtime analysis of the encoding algorithm. Assuming that evaluating the kernel function $K$ takes $\mathcal{O}(m)$, that is to say the complexity of kernel function is linear in the dimension of the input (e.g., the Gaussian kernel) - the complexity of running the encoding algorithm over a dataset of $n$ samples with $s$ landmarks to $d$ dimensional embedding is $O(\max{(s^{3}, snm, snd)})$. The first term corresponds to the symmetric eigen-decomposition of the matrix $H_{\mathcal{Z}}$, the second term reflects the number of kernel evaluations, and the last term represents the sign-thresholded random projection, as discussed previously. In a typical \nys kernel approximation setting where $s \ll n$, depending on the nature of the kernel function being used, the second term is likely to be the dominant term, which is why the choice of $s$ and $K$ is crucial. Moreover, the analysis we provided here assumes the complexity of the kernel is linear in the number of features (for example, the Gaussian kernel); however, this is not always the case for kernel functions. For example, the spectrum kernel~\cite{leslie2001spectrum}, which compares the $N$-gram composition between two input strings, has $\mathcal{O}(m \log (m))$ in the length of the input sequences.

%%%%%%%%%%%%%%%%%%%%%%%%%%%%%%%%%%%%%%%%%%%%%%%%%%%%%%%%%%
\section{Evaluation}

We conduct an empirical evaluation of NysHD. First, we validate \ref{main_theorem} and then evaluate the practicality of our method in terms of accuracy and efficiency against both HDC and non-HDC baselines.

\subsection{Datasets and Kernel Functions}
\label{kernel}

\begin{table*}[h]
\centering
\small
\begin{tabular}{ccccccc}
\hline
\textbf{Task} & \textbf{dataset} & \textbf{\# training} & \textbf{\# testing} & \textbf{\# class} & \textbf{Description} \\
\hline
Graph & \textbf{ENZYMES}~\cite{borgwardt2005protein} & 480 & 120 & 6 & Graph with attributed nodes\\
 & \textbf{NCI1} \cite{wale2008comparison} & 3288 & 822 & 2 & Graph with labeled nodes\\
 & \textbf{D\&D}~\cite{dobson2003distinguishing} & 943 & 235 & 2 & Graph with labeled nodes\\
 & \textbf{BZR}~\cite{sutherland2003spline} & 324 & 81 & 2 & Graph with attributed nodes \\
 & \textbf{MUTAG}~\cite{debnath1991structure} & 150 & 38 & 2 & Graph with labeled nodes \\
 & \textbf{COX2}~\cite{sutherland2003spline} & 373 & 94 & 2 & Graph with attributed nodes \\
 & \textbf{NCI109}~\cite{wale2008comparison} & 3301 & 826 & 2 & Graph with labeled nodes \\
 & \textbf{Mutagenicity}~\cite{riesen2008iam} & 3469 & 868 & 2 & Graph with labeled nodes \\
\hline
 String & \textbf{Protein}~\cite{selvaraj2023ion} & 721 & 181 & 6 & Protein sequence \\
 & \textbf{SMS}~\cite{misc_sms_spam_collection_228} & 4459  & 1115 & 2 & Nautral Language \\
 & \textbf{Splice}~\cite{towell1992primate} & 2552 & 628 & 3 & DNA sequence\\
 & \textbf{Promoter}~\cite{misc_molecular_biology} & 84 & 22 & 2 & DNA sequence\\
\hline
\hline
\end{tabular}
\caption{Summary of tasks and datasets}
\label{tbl:dataset}
\end{table*}

The proposed encoding method is general-purpose and applicable to various data and tasks, provided a suitable kernel function is available. To confirm its versatility, we conduct assessments across two distinct tasks: \textbf{Graphs} and \textbf{Strings} classification. In total, we have chosen 8 graph datasets from TUDataset~\cite{morris2020tudataset}, a well-known  standardized repository for graph classification benchmarking datasets, and 4 string datasets for bio-sequence and text classification. More information on datasets can be found in Table.~\ref{tbl:dataset}.

The main advantage of our method is that it generates embeddings for HDC learning algorithms that preserve any user-defined positive-semidefinite kernel function. As such, the choice of kernel function is crucial for achieving the best accuracy and efficiency. For our method, we use the gappy kernel~\cite{leslie2004fast} for string classification. The gappy kernel is a variant of the spectrum kernel~\cite{leslie2001spectrum} that allows a small amount of gap within the $N$-grams. The propagation kernel~\cite{neumann2016propagation} is used for graph classification for its ability to work with labeled or attributed graphs, as discussed in the section ``kernel methods"~\ref{kernel_methods}. We chose the aforementioned kernels for their relatively low computation complexity and effectiveness on respective tasks.

\subsection{Experimental Setup and Baselines}
\label{setup}

We benchmark our method against the existing HDC encoding baselines within each domain: \textbf{Graph classification}  using the encoding scheme from introduced in GraphHD~\cite{nunes2022graphhd} and \textbf{String classification} uses $N$-gram HDC encoding approach~\cite{joshi2017language}. The details of both HDC encoding methods are discussed in the background section.

In addition to HDC-baselines, we also include comparisons with popular state-of-the-art deep neural network architectures. For graph classification, we use \textbf{DGCNN}~\cite{zhang2018end}, \textbf{GCN}~\cite{chen2019powerful}, \textbf{GIN}~\cite{xu2018powerful}, \textbf{GIUNet}~\cite{amouzad2024graph}. On string datasets, following the recent trend of applying large models for bio-sequence and language modeling~\cite{qiu2020pre,raffel2020exploring}, we fine-tune Large protein model (\textbf{ESM-2-8m}~\cite{rives2021biological}) for for bio-sequence datasets, and large language model (\textbf{BERT}~\cite{devlin2018bert}) for natural language dataset.

For our method, we set the number of landmarks: $s$ = $max$(300, 2\% of training data) on each dataset, and kernel specific hyperparamters are chosen empirically. To ensure the fairness of comparison, we use an identical HDC learning pipeline adapted from OnlineHD~\cite{hernandez2021onlinehd} when evaluating different HDC encoding methods. The Perceptron algorithm \cite{rosenblatt1958perceptron} is used to fine-tune class prototypes. In all of our experiments, 20 epochs of fine-tuning have been found to be sufficient.

All experiments were run on an Intel i5-11400 CPU (except for large models fine-tuning, which required an Nvidia RTX 3050 GPU is used due to long training time on CPU). We evaluate different methods by their training efficiency (time in seconds) and classification accuracy. Each experiment was run 10 times, and we report the mean and standard deviation of the results.

\subsection{Approximating Normalized Kernel Matrix}
\label{approx}

\begin{figure*}
\centering
  \includegraphics[width=1\textwidth]{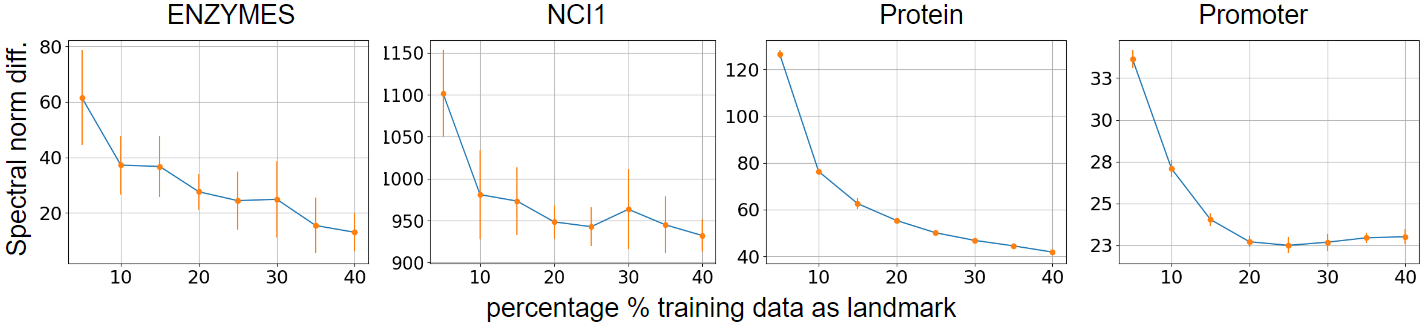}
  \caption{Numerical difference of spectral norm between normalized kernel matrices compute directly from kernel function and approximated kernel matrices with our encoding.}
\label{kernelapprox}
\end{figure*}

The classification accuracy of HDC-based models hinges on the capability of encodings to capture some salient notion of similarity via inner product. 
A straightforward way to verify our encodings preserve the kernel is to compare the spectral norm of the normalized kernel matrix (computed using the kernel function) with the pairwise inner products of the encodings. Using different percentages of the training samples as landmarks, the results for selected datasets (ENZYMES, NCI1, Protein, Promoter) are shown in Fig.~\ref{kernelapprox}. In line with Theorem \ref{main_theorem}, our method preserves the normalized kernel value, whose quality is positively correlated with the number of landmarks (a less noisy approximation can be achieved with a larger number of landmarks). The results here indicate that our method is effective in transferring similarity functions in kernel methods to HDC settings.

\subsection{Accuracy and Efficiency Results}
\label{result}

\begin{table*}[h]
\renewcommand{\arraystretch}{1.1}
\setlength{\tabcolsep}{3pt}
\small
\centering
\begin{tabular}{cc|cccccccc}
\hline
\hline
\textbf{Method} && NCI1 & ENZYMES & D\&D
 & BZR & MUTAG & COX2 & NCI109 & Mutagenicity \\
\hline
\textbf{DGCNN} &{Acc.}& 70.2 $\pm$ 2.2\% & 36.9 $\pm$ 4.6\% & 74.5 $\pm$ 3.4\% & 81.5 $\pm$ 4.4\% & 82.9 $\pm$ 4.1\% & 78.3 $\pm$ 2.7\% & 71.1 $\pm$ 2.2\% & 75.0 $\pm$ 1.3\% \\

&\textit{Time}& \textit{37.5} $\pm$ \textit{1.2s} & \textit{5.6} $\pm$ \textit{0.5s} & \textit{59.2} $\pm$ \textit{3.0s} & \textit{4.0} $\pm$ \textit{0.0s} & \textit{1.0} $\pm$ \textit{0.0s} & \textit{4.1} $\pm$ \textit{0.3s} & \textit{36.2} $\pm$ \textit{0.9s} & \textit{44.8} $\pm$ \textit{1.3s} \\

\hline
\textbf{GCN} &{Acc.}& \textbf{79.9} $\pm$ \textbf{1.1\%} & \underline{60.7 $\pm$ 2.1\%} & \underline{74.8 $\pm$ 1.6\%} & \textbf{84.2} $\pm$ \textbf{2.2\%} & \textbf{85.5} $\pm$ \textbf{3.8\%} & \textbf{83.1} $\pm$ \textbf{3.7\%} & \textbf{80.2} $\pm$ \textbf{1.1\%} & \textbf{79.9} $\pm$ \textbf{1.0\%} \\

&\textit{Time}& \textit{74.3} $\pm$ \textit{0.6s} & \textit{15.9} $\pm$ \textit{0.5s} & \textit{262.3} $\pm$ \textit{9.2s} & \textit{7.9} $\pm$ \textit{0.3s} & \textit{3.0} $\pm$ \textit{0.0s} & \textit{11.0} $\pm$ \textit{0.0s} & \textit{73.5} $\pm$ \textit{0.7s} & \textit{75.3} $\pm$ \textit{0.5s} \\

 \hline
\textbf{GIN} &{Acc.}& 73.4 $\pm$ 1.7\% & 28.8 $\pm$ 4.2\% & 67.5 $\pm$ 5.9\%
 & 76.4 $\pm$ 8.4\% & 76.8 $\pm$ 9.6\% & 78.1 $\pm$ 4.3\% & 70.8 $\pm$ 2.6\% & \underline{78.0 $\pm$ 1.7\%} \\

&\textit{Time}& \textit{66.8} $\pm$ \textit{1.0s} & \textit{9.0} $\pm$ \textit{0.0s} & \textit{63.6} $\pm$ \textit{0.7s} & \textit{6.0} $\pm$ \textit{0.0s} & \textit{2.0} $\pm$ \textit{0.0s} & \textit{7.2} $\pm$ \textit{0.4s} & \textit{67.1} $\pm$ \textit{0.5s} & \textit{70.5} $\pm$ \textit{0.5s} \\

 \hline
\textbf{GIUNet} &{Acc.}& 72.4 $\pm$ 1.4\% & 29.9 $\pm$ 4.0\% & 63.4 $\pm$ 6.9\%
 & 78.3 $\pm$ 10.7\% & 85.0 $\pm$ 4.7\% & 77.4 $\pm$ 7.0\% & 68.8 $\pm$ 4.3\% & 76.5 $\pm$ 0.8\% \\
 
&\textit{Time}& \textit{89.0} $\pm$ \textit{0.4s} & \textit{13.0} $\pm$ \textit{0.0s} & \textit{96.2} $\pm$ \textit{0.7s} & \textit{9.0} $\pm$ \textit{0.0s} & \textit{3.0} $\pm$ \textit{0.0s} & \textit{11.0} $\pm$ \textit{0.0s} & \textit{89.5} $\pm$ \textit{0.5s} & \textit{94.6} $\pm$ \textit{0.5s} \\

 \hline
\textbf{GraphHD} &{Acc.}& 60.0 $\pm$ 2.1\% & 23.2 $\pm$ 2.3\% & 67.6 $\pm$ 1.3\%
 & 74.9 $\pm$ 2.1\% & \underline{85.3 $\pm$ 10.2\%} & \underline{81.9 $\pm$ 3.9\%} & 59.9 $\pm$ 2.1\% & 59.8 $\pm$ 1.7\% \\

&\textit{Time}& \textit{30.0} $\pm$ \textit{0.7s} & \textit{6.0} $\pm$ \textit{0.0s} & \textit{32.9} $\pm$ \textit{0.3s} & \textit{3.0} $\pm$ \textit{0.0s} & \textit{1.0} $\pm$ \textit{0.0s} & \textit{3.0} $\pm$ \textit{0.0s} & \textit{30.0} $\pm$ \textit{0.0s} & \textit{31.3} $\pm$ \textit{0.4s} \\

\hline
\textbf{NysHD} &{Acc.}& \underline{73.8 $\pm$ 2.2\%} & \textbf{61.3} $\pm$ \textbf{2.5\%} & \textbf{76.2} $\pm$ \textbf{2.8\%}
 & \underline{82.0 $\pm$ 3.3\%} & \textbf{85.5} $\pm$ \textbf{3.4\%} & 74.6 $\pm$ 3.7\% & \underline{71.8 $\pm$ 2.0\%} & 75.2 $\pm$ 0.9\% \\

&\textit{Time}& \textit{43.8} $\pm$ \textit{0.8s} & \textit{7.2} $\pm$ \textit{0.6s} & \textit{35.4} $\pm$ \textit{1.6s} & \textit{3.4} $\pm$ \textit{0.5s} & \textit{2.0} $\pm$ \textit{0.0s} & \textit{5.5} $\pm$ \textit{0.8s} & \textit{44.4} $\pm$ \textit{0.8s} & \textit{35.7} $\pm$ \textit{1.0s} \\
 
\hline
\hline
\end{tabular}
\caption{Experimental results. The best accuracy result for each dataset are \textbf{highted} and second best are \underline{underlined}.}
\label{tbl:g_result}
\end{table*}

\begin{table}
\renewcommand{\arraystretch}{1.1}
\setlength{\tabcolsep}{1mm} 
\small
\centering
\begin{tabular}{c|cccc}
\hline
\hline
\textbf{Method}  & Protein & SMS & Promoter & Splice\\
\hline
\textbf{LM} & 96 $\pm$ 0\% & \textbf{99 $\pm$ 0\%} & \underline{82 $\pm$ 2.7\%} & \textbf{92 $\pm$ 1.4\%} \\
\textbf{Finetune} & \textit{2003 $\pm$ 193s} & \textit{2846 $\pm$ 31s} & \textit{6 $\pm$ 3.0s} & \textit{142 $\pm$ 1.9s} 
\\

\hline
\textbf{$N$-gram} & \underline{96 $\pm$ 0.8\%} & \underline{97 $\pm$ 0.3}\% & 52 $\pm$ 4.0\% & 43 $\pm$ 4.9\% \\
\textbf{HDC} & \textit{13 $\pm$ 0.0s} & \textit{43 $\pm$ 0.4s} & \textit{1 $\pm$ 0.0s} & \textit{25 $\pm$ 0.8s} \\

\hline
\textbf{NysHD} & \textbf{98 $\pm$ 0.3\%} & \underline{97 $\pm$ 0.2\%} & \textbf{89 $\pm$ 3.4\%} & \underline{72 $\pm$ 1.7\%} \\
 & \textit{19 $\pm$} 0.7s & \textit{34 $\pm$ 0.0s} & \textit{1 $\pm$ 0.0s} & \textit{21 $\pm$ 0.3s} \\
\hline
\hline
\end{tabular}
\caption{Accuracy and training time on string datasets}
\label{tbl:s_result}
\end{table}

The accuracy results for graph and string datasets are summarized in Table~\ref{tbl:g_result} and Table~\ref{tbl:s_result}. In comparison with the GraphHD, NysHD achieves, on average, 11\% better accuracy on graph datasets. The improvement is especially significant on the ENZYMES dataset, which shows a 38\% accuracy improvement. This is because the previous HDC encoding method on graph~\cite{nunes2022graphhd} could not utilize node attributes in attributed graphs, whereas our method can. On string datasets, our method again consistently achieves better accuracy compared to $N$-gram based HDC encoding, with an average improvement of 17\%. Efficiency-wise, our method is comparable to previous HDC encoding methods,  although it is slightly slower than GraphHD~\cite{nunes2022graphhd} on graph datasets due to kernel computation.

Despite the efficiency and implementation simplicity of HDC learning algorithms, there remains an accuracy gap between HDC-based models and deep neural network (DNN) models for more complex classification tasks. We hope this work will help reduce that gap. To that end, we also include several state-of-the-art DNN approaches on both graph and string classification in our comparison. On graph datasets, NysHD achieved the best accuracy in 3 out of 8 graph datasets. On average, NysHD outperforms DGCNN by 3\%, GIN and GIUNet by 6\% for graph classification. Although graph convolutional network (GCN) still yield the best accuracy for some datasets, our method is on average 52\% faster than GCN. On string datasets, our method with the gappy kernel achieves better accuracy on 2 out of 3 bio-sequence datasets. For SMS and Splice dataset, the LM fine-tune method achieves better classification accuracy, but their training time is 6 to 83 times longer than our method.

Overall, the strength of NysHD becomes most apparent when dealing with data with complex structures or attributes that existing HDC encoding methods do not exploit.

%%%%%%%%%%%%%%%%%%%%%%%%%%%%%%%%%%%%%%%%%%%%%%%%%%%%%%%%%%%%%%
\section{Overhead \& Scalability}

This work allows future HDC works to exploit the power of kernel methods while still conforming to the general formalism and benefits of HDC. We recognize that the improvements in NysHD also come with additional computation costs in the form of kernel evaluation. How to minimize such cost for HDC applications are non-trivial problems that need further investigations. The main scalability challenge is to obtain a set of landmarks that is as small as possible, while still providing a good approximation to the true kernel matrix. There is a large body of work on more sophisticated sampling schemes for the \nys method that could help make our methods scalable to larger datasets~\cite{kumar2012sampling,musco2017recursive}. We would be interested in studying these in future work.

\section{Conclusion}

The success of HDC-based learning methods is contingent upon identifying an encoding function that preserves a suitable notion of similarity (kernel) for the task at hand. In this paper, we leverage the connection between the kernel method and HDC through the lens of \nys method for kernel estimation. Particularly, we propose NysHD, a new HDC encoding method that constructs encoding functions using suitable kernel functions for specific tasks.
As a result, compared with previous HDC encoding methods, NysHD achieves substantial improvements - on average, 11\% accuracy improvement on graph datasets and 17\% on string datasets. There are many situations in which methods from the kernel literature outperform existing HDC-based solutions, therefore, our approach can be expected to lead to performance improvements in many HDC applications.

\section{Acknowledgements}
This work was supported in part by National Science Foundation under Grants \#2003279, \#1826967, \#2100237, \#2112167, \#1911095, \#2112665, and in part by SRC under task \#3021.001. This work was also supported in part by PRISM and CoCoSys, centers in JUMP 2.0, an SRC program sponsored by DARPA.

\bibliography{aaai25}

@book{shawe2004kernel,
  title={Kernel methods for pattern analysis},
  author={Shawe-Taylor, John and Cristianini, Nello},
  year={2004},
  publisher={Cambridge university press}
}

@book{smola1998learning,
  title={Learning with kernels},
  author={Smola, Alex J and Sch{\"o}lkopf, Bernhard},
  volume={4},
  year={1998},
  publisher={Citeseer}
}

@Article{	  rahimi2018efficient,
  title		= {Efficient biosignal processing using hyperdimensional
		  computing: Network templates for combined learning and
		  classification of {ExG} signals},
  author	= {Rahimi, Abbas and Kanerva, Pentti and Benini, Luca and
		  Rabaey, Jan M},
  journal	= {Proceedings of the IEEE},
  volume	= {107},
  number	= {1},
  pages		= {123--143},
  year		= {2018},
  publisher	= {IEEE}
}

@article{menon2022highly,
  title={A highly energy-efficient hyperdimensional computing processor for biosignal classification},
  author={Menon, Alisha and Sun, Daniel and Sabouri, Sarina and Lee, Kyoungtae and Aristio, Melvin and Liew, Harrison and Rabaey, Jan M},
  journal={TBCAS},
  year={2022},
  publisher={IEEE}
}

@book{hertz2018introduction,
  title={Introduction to the theory of neural computation},
  author={Hertz, John A},
  year={2018},
  publisher={Crc Press}
}

@article{steinwart2001influence,
  title={On the influence of the kernel on the consistency of support vector machines},
  author={Steinwart, Ingo},
  journal={JMLR},
  volume={2},
  number={Nov},
  year={2001}
}

@article{plate1995holographic,
  title={Holographic reduced representations},
  author={Plate, Tony A},
  journal={IEEE Transactions on Neural networks},
  volume={6},
  number={3},
  pages={623--641},
  year={1995},
  publisher={IEEE}
}

@article{kanerva2009hyperdimensional,
  title={Hyperdimensional computing: An introduction to computing in distributed representation with high-dimensional random vectors},
  author={Kanerva, Pentti},
  journal={Cognitive computation},
  volume={1},
  pages={139--159},
  year={2009},
  publisher={Springer}
}

@article{thomas2021theoretical,
  title={A theoretical perspective on hyperdimensional computing},
  author={Thomas, Anthony and Dasgupta, Sanjoy and Rosing, Tajana},
  journal={JAIR},
  volume={72},
  pages={215--249},
  year={2021}
}

@inproceedings{khaleghi2022generic,
  title={GENERIC: highly efficient learning engine on edge using hyperdimensional computing},
  author={Khaleghi, Behnam and Kang, Jaeyoung and Xu, Hanyang and Morris, Justin and Rosing, Tajana},
  booktitle={DAC},
  pages={1117--1122},
  year={2022}
}

@inproceedings{zhao2022fedhd,
  title={FedHD: federated learning with hyperdimensional computing},
  author={Zhao, Quanling and Lee, Kai and Liu, Jeffrey and Huzaifa, Muhammad and Yu, Xiaofan and Rosing, Tajana},
  booktitle={MobiCom},
  pages={791--793},
  year={2022}
}

@inproceedings{morris2021hydrea,
  title={Hydrea: Towards more robust and efficient machine learning systems with hyperdimensional computing},
  author={Morris, Justin and Ergun, Kazim and Khaleghi, Behnam and Imani, Mohsen and Aksanli, Baris and Rosing, Tajana},
  booktitle={DATE},
  pages={723--728},
  year={2021},
  organization={IEEE}
}

@inproceedings{dutta2022hdnn,
  title={Hdnn-pim: Efficient in memory design of hyperdimensional computing with feature extraction},
  author={Dutta, Arpan and Gupta, Saransh and Khaleghi, Behnam and Chandrasekaran, Rishikanth and Xu, Weihong and Rosing, Tajana},
  booktitle={GLSVLSI},
  pages={281--286},
  year={2022}
}

@inproceedings{kang2022xcelhd,
  title={Xcelhd: An efficient gpu-powered hyperdimensional computing with parallelized training},
  author={Kang, Jaeyoung and Khaleghi, Behnam and Kim, Yeseong and Rosing, Tajana},
  booktitle={ASP-DAC},
  pages={220--225},
  year={2022},
  organization={IEEE}
}

@article{drineas2005nystrom,
  title={On the Nystr{\"o}m Method for Approximating a Gram Matrix for Improved Kernel-Based Learning.},
  author={Drineas, Petros and Mahoney, Michael W and Cristianini, Nello},
  journal={JMLR},
  volume={6},
  number={12},
  year={2005}
}

@article{neumann2016propagation,
  title={Propagation kernels: efficient graph kernels from propagated information},
  author={Neumann, Marion and Garnett, Roman and Bauckhage, Christian and Kersting, Kristian},
  journal={Machine learning},
  volume={102},
  pages={209--245},
  year={2016},
  publisher={Springer}
}

@incollection{leslie2001spectrum,
  title={The spectrum kernel: A string kernel for SVM protein classification},
  author={Leslie, Christina and Eskin, Eleazar and Noble, William Stafford},
  booktitle={Biocomputing 2002},
  pages={564--575},
  year={2001},
  publisher={World Scientific}
}

@article{hofmann2008kernel,
  title={Kernel methods in machine learning},
  author={Hofmann, Thomas and Sch{\"o}lkopf, Bernhard and Smola, Alexander J},
  year={2008}
}

@inproceedings{gudmundsson2008support,
  title={Support vector machines and dynamic time warping for time series},
  author={Gudmundsson, Steinn and Runarsson, Thomas Philip and Sigurdsson, Sven},
  booktitle={2008 IEEE International Joint Conference on Neural Networks},
  pages={2772--2776},
  year={2008},
  organization={IEEE}
}

@inproceedings{imani2018hdna,
  title={Hdna: Energy-efficient dna sequencing using hyperdimensional computing},
  author={Imani, Mohsen and Nassar, Tarek and Rahimi, Abbas and Rosing, Tajana},
  booktitle={BHI},
  pages={271--274},
  year={2018},
  organization={IEEE}
}

@inproceedings{asgarinejad2020detection,
  title={Detection of epileptic seizures from surface eeg using hyperdimensional computing},
  author={Asgarinejad, Fatemeh and Thomas, Anthony and Rosing, Tajana},
  booktitle={EMBC},
  pages={536--540},
  year={2020},
  organization={IEEE}
}

@article{paxon2021computing,
  title={Computing on Functions Using Randomized Vector Representations},
  author={Paxon Frady, E and Kleyko, Denis and Kymn, Christopher J and Olshausen, Bruno A and Sommer, Friedrich T},
  journal={},
  pages={arXiv--2109},
  year={2021}
}

@article{rahimi2007random,
  title={Random features for large-scale kernel machines},
  author={Rahimi, Ali and Recht, Benjamin},
  journal={NeurIPS},
  volume={20},
  year={2007}
}

@inproceedings{hernandez2021onlinehd,
  title={Onlinehd: Robust, efficient, and single-pass online learning using hyperdimensional system},
  author={Hern{\'a}ndez-Cano, Alejandro and Matsumoto, Namiko and Ping, Eric and Imani, Mohsen},
  booktitle={DATE},
  pages={56--61},
  year={2021},
  organization={IEEE}
}

@inproceedings{nunes2022graphhd,
  title={GraphHD: Efficient graph classification using hyperdimensional computing},
  author={Nunes, Igor and Heddes, Mike and Givargis, Tony and Nicolau, Alexandru and Veidenbaum, Alex},
  booktitle={DATE},
  pages={1485--1490},
  year={2022},
  organization={IEEE}
}

@article{brin1998anatomy,
  title={The anatomy of a large-scale hypertextual web search engine},
  author={Brin, Sergey and Page, Lawrence},
  journal={Computer networks and ISDN systems},
  volume={30},
  number={1-7},
  pages={107--117},
  year={1998},
  publisher={Elsevier}
}

@misc{misc_sms_spam_collection_228,
  author       = {Almeida,Tiago and Hidalgo,Jos},
  title        = {{SMS Spam Collection}},
  year         = {2012},
  howpublished = {UCI Machine Learning Repository},
  note         = {{DOI}: https://doi.org/10.24432/C5CC84}
}

@article{selvaraj2023ion,
  title={Ion-pumping microbial rhodopsin protein classification by machine learning approach},
  author={Selvaraj, Muthu Krishnan and Thakur, Anamika and Kumar, Manoj and Pinnaka, Anil Kumar and Suri, Chander Raman and Siddhardha, Busi and Elumalai, Senthil Prasad},
  journal={BMC bioinformatics},
  volume={24},
  number={1},
  pages={29},
  year={2023},
  publisher={Springer}
}

@article{wale2008comparison,
  title={Comparison of descriptor spaces for chemical compound retrieval and classification},
  author={Wale, Nikil and Watson, Ian A and Karypis, George},
  journal={Knowledge and Information Systems},
  volume={14},
  pages={347--375},
  year={2008},
  publisher={Springer}
}

@article{borgwardt2005protein,
  title={Protein function prediction via graph kernels},
  author={Borgwardt, Karsten M and Ong, Cheng Soon and Sch{\"o}nauer, Stefan and Vishwanathan, SVN and Smola, Alex J and Kriegel, Hans-Peter},
  journal={Bioinformatics},
  volume={21},
  year={2005},
  publisher={Oxford University Press}
}

@article{meanti2020kernel,
  title={Kernel methods through the roof: handling billions of points efficiently},
  author={Meanti, Giacomo and Carratino, Luigi and Rosasco, Lorenzo and Rudi, Alessandro},
  journal={NeurIPS},
  volume={33},
  pages={14410--14422},
  year={2020}
}

@article{kumar2012sampling,
  title={Sampling methods for the Nystr{\"o}m method},
  author={Kumar, Sanjiv and Mohri, Mehryar and Talwalkar, Ameet},
  journal={JMLR},
  volume={13},
  number={1},
  pages={981--1006},
  year={2012},
  publisher={JMLR. org}
}

@inproceedings{xu2023fsl,
  title={FSL-HD: Accelerating Few-Shot Learning on ReRAM using Hyperdimensional Computing},
  author={Xu, Weihong and Kang, Jaeyoung and Rosing, Tajana},
  booktitle={DATE},
  pages={1--6},
  year={2023},
  organization={IEEE}
}

@inproceedings{salamat2019f5,
  title={F5-hd: Fast flexible fpga-based framework for refreshing hyperdimensional computing},
  author={Salamat, Sahand and Imani, Mohsen and Khaleghi, Behnam and Rosing, Tajana},
  booktitle={Proceedings of the 2019 ACM/SIGDA International Symposium on Field-Programmable Gate Arrays},
  pages={53--62},
  year={2019}
}

@article{kang2022openhd,
  title={Openhd: A gpu-powered framework for hyperdimensional computing},
  author={Kang, Jaeyoung and Khaleghi, Behnam and Rosing, Tajana and Kim, Yeseong},
  journal={IEEE Transactions on Computers},
  volume={71},
  number={11},
  pages={},
  year={2022},
  publisher={IEEE}
}

@article{zhang2023accelerating,
  title={: Accelerating Event-based Workloads with HyperDimensional Computing and Spiking Neural Networks},
  author={Zhang, Tianqi and Morris, Justing and Stewart, Kenneth and Lui, Hin Wai and Khaleghi, Behnam and Thomas, Anthony and Goncalves-Marback, Thiago and Aksanli, Baris and Neftci, Emre O and Rosing, Tajana},
  journal={TCAD},
  year={2023},
  publisher={IEEE}
}

@article{williams2000using,
  title={Using the Nystr{\"o}m method to speed up kernel machines},
  author={Williams, Christopher and Seeger, Matthias},
  journal={NeurIPS},
  volume={13},
  year={2000}
}

@inproceedings{joshi2017language,
  title={Language geometry using random indexing},
  author={Joshi, Aditya and Halseth, Johan T and Kanerva, Pentti},
  booktitle={Quantum Interaction: 10th International Conference, QI 2016, San Francisco, CA, USA, July 20-22, 2016, Revised Selected Papers 10},
  pages={265--274},
  year={2017},
  organization={Springer}
}

@inproceedings{miranda2022hyperdimensional,
  title={Hyperdimensional computing encoding schemes for improved image classification},
  author={Miranda, Victor and d'Aliberti, Olivia},
  booktitle={HST},
  pages={1--9},
  year={2022},
  organization={IEEE}
}

@inproceedings{imani2020dual,
  title={Dual: Acceleration of clustering algorithms using digital-based processing in-memory},
  author={Imani, Mohsen and Pampana, Saikishan and Gupta, Saransh and Zhou, Minxuan and Kim, Yeseong and Rosing, Tajana},
  booktitle={MICRO},
  pages={356--371},
  year={2020},
  organization={IEEE}
}

@inproceedings{imani2019bric,
  title={Bric: Locality-based encoding for energy-efficient brain-inspired hyperdimensional computing},
  author={Imani, Mohsen and Morris, Justin and Messerly, John and Shu, Helen and Deng, Yaobang and Rosing, Tajana},
  booktitle={DAC},
  pages={1--6},
  year={2019}
}

@article{rosenblatt1958perceptron,
  title={The perceptron: a probabilistic model for information storage and organization in the brain.},
  author={Rosenblatt, Frank},
  journal={Psychological review},
  volume={65},
  number={6},
  pages={386},
  year={1958},
  publisher={American Psychological Association}
}

@inproceedings{
yu2022understanding,
title={Understanding Hyperdimensional Computing for Parallel Single-Pass Learning},
author={Tao Yu and Yichi Zhang and Zhiru Zhang and Christopher De Sa},
booktitle={NeurIPS},
editor={Alice H. Oh and Alekh Agarwal and Danielle Belgrave and Kyunghyun Cho},
year={2022},
url={https://openreview.net/forum?id=8ON84BdnSn}
}

@article{kleyko2023survey,
  title={A survey on hyperdimensional computing aka vector symbolic architectures, part ii: Applications, cognitive models, and challenges},
  author={Kleyko, Denis and Rachkovskij, Dmitri and Osipov, Evgeny and Rahimi, Abbas},
  journal={ACM Computing Surveys},
  volume={55},
  number={9},
  pages={1--52},
  year={2023},
  publisher={ACM New York, NY}
}

@article{shimodaira2001dynamic,
  title={Dynamic time-alignment kernel in support vector machine},
  author={Shimodaira, Hiroshi and Noma, Ken-ichi and Nakai, Mitsuru and Sagayama, Shigeki},
  journal={NeurIPS},
  volume={14},
  year={2001}
}

@article{harris1954distributional,
  title={Distributional structure},
  author={Harris, Zellig S},
  journal={Word},
  volume={10},
  number={2-3},
  pages={146--162},
  year={1954},
  publisher={Taylor \& Francis}
}

@article{chen2019powerful,
  title={Are powerful graph neural nets necessary? a dissection on graph classification},
  author={Chen, Ting and Bian, Song and Sun, Yizhou},
  journal={arXiv preprint arXiv:1905.04579},
  year={2019}
}

@inproceedings{charikar2002similarity,
  title={Similarity estimation techniques from rounding algorithms},
  author={Charikar, Moses S},
  booktitle={Proceedings of the thiry-fourth annual ACM symposium on Theory of computing},
  pages={380--388},
  year={2002}
}

@article{musco2017recursive,
  title={Recursive sampling for the nystrom method},
  author={Musco, Cameron and Musco, Christopher},
  journal={NeurIPS},
  volume={30},
  year={2017}
}

@misc{misc_molecular_biology,
  author       = {Harley, C. and Reynolds, R. and Noordewier, M.},
  title        = {{Molecular Biology (Promoter Gene Sequences)}},
  year         = {1990},
  howpublished = {UCI Machine Learning Repository},
  note         = {{DOI}: https://doi.org/10.24432/C5S01D}
}

@article{dobson2003distinguishing,
  title={Distinguishing enzyme structures from non-enzymes without alignments},
  author={Dobson, Paul D and Doig, Andrew J},
  journal={Journal of molecular biology},
  volume={330},
  number={4},
  pages={771--783},
  year={2003},
  publisher={Elsevier}
}

@article{sutherland2003spline,
  title={Spline-fitting with a genetic algorithm: A method for developing classification structure- activity relationships},
  author={Sutherland, Jeffrey J and O'brien, Lee A and Weaver, Donald F},
  journal={Journal of chemical information and computer sciences},
  volume={43},
  number={6},
  pages={1906--1915},
  year={2003},
  publisher={ACS Publications}
}

@article{debnath1991structure,
  title={Structure-activity relationship of mutagenic aromatic and heteroaromatic nitro compounds. correlation with molecular orbital energies and hydrophobicity},
  author={Debnath, Asim Kumar and Lopez de Compadre, Rosa L and Debnath, Gargi and Shusterman, Alan J and Hansch, Corwin},
  journal={Journal of medicinal chemistry},
  volume={34},
  number={2},
  pages={786--797},
  year={1991},
  publisher={ACS Publications}
}

@inproceedings{riesen2008iam,
  title={IAM graph database repository for graph based pattern recognition and machine learning},
  author={Riesen, Kaspar and Bunke, Horst},
  booktitle={SSPR \& SPR 2008, Orlando, USA, December 4-6, 2008. Proceedings},
  pages={287--297},
  year={2008},
  organization={Springer}
}

@inproceedings{zhang2018end,
  title={An end-to-end deep learning architecture for graph classification},
  author={Zhang, Muhan and Cui, Zhicheng and Neumann, Marion and Chen, Yixin},
  booktitle={AAAI},
  volume={32},
  number={1},
  year={2018}
}

@article{xu2018powerful,
  title={How powerful are graph neural networks?},
  author={Xu, Keyulu and Hu, Weihua and Leskovec, Jure and Jegelka, Stefanie},
  journal={arXiv preprint arXiv:1810.00826},
  year={2018}
}

@article{amouzad2024graph,
  title={Graph isomorphism U-Net},
  author={Amouzad, Alireza and Dehghanian, Zahra and Saravani, Saeed and Amirmazlaghani, Maryam and Roshanfekr, Behnam},
  journal={Expert Systems with Applications},
  volume={236},
  pages={121280},
  year={2024},
  publisher={Elsevier}
}

@article{leslie2004fast,
  title={Fast string kernels using inexact matching for protein sequences.},
  author={Leslie, Christina and Kuang, Rui and Bennett, Kristin},
  journal={JMLR},
  volume={5},
  number={9},
  year={2004}
}

@article{rives2021biological,
  title={Biological structure and function emerge from scaling unsupervised learning to 250 million protein sequences},
  author={Rives, Alexander and Meier, Joshua and Sercu, Tom and Goyal, Siddharth and Lin, Zeming and Liu, Jason and Guo, Demi and Ott, Myle and Zitnick, C Lawrence and Ma, Jerry and others},
  journal={Proceedings of the National Academy of Sciences},
  volume={118},
  number={15},
  pages={e2016239118},
  year={2021},
  publisher={National Acad Sciences}
}

@article{devlin2018bert,
  title={Bert: Pre-training of deep bidirectional transformers for language understanding},
  author={Devlin, Jacob and Chang, Ming-Wei and Lee, Kenton and Toutanova, Kristina},
  journal={arXiv preprint arXiv:1810.04805},
  year={2018}
}

@article{qiu2020pre,
  title={Pre-trained models for natural language processing: A survey},
  author={Qiu, Xipeng and Sun, Tianxiang and Xu, Yige and Shao, Yunfan and Dai, Ning and Huang, Xuanjing},
  journal={Science China technological sciences},
  volume={63},
  number={10},
  pages={1872--1897},
  year={2020},
  publisher={Springer}
}

@article{raffel2020exploring,
  title={Exploring the limits of transfer learning with a unified text-to-text transformer},
  author={Raffel, Colin and Shazeer, Noam and Roberts, Adam and Lee, Katherine and Narang, Sharan and Matena, Michael and Zhou, Yanqi and Li, Wei and Liu, Peter J},
  journal={JMLR},
  volume={21},
  number={140},
  pages={1--67},
  year={2020}
}

@article{morris2020tudataset,
  title={Tudataset: A collection of benchmark datasets for learning with graphs},
  author={Morris, Christopher and Kriege, Nils M and Bause, Franka and Kersting, Kristian and Mutzel, Petra and Neumann, Marion},
  journal={arXiv preprint arXiv:2007.08663},
  year={2020}
}

@article{voelker2020short,
  title={A short letter on the dot product between rotated fourier transforms},
  author={Voelker, Aaron R},
  journal={arXiv preprint arXiv:2007.13462},
  year={2020}
}

@inproceedings{ah2010normalized,
  title={Normalized kernels as similarity indices},
  author={Ah-Pine, Julien},
  booktitle={PAKDD 2010, Hyderabad, India, June 21-24, 2010. Proceedings. Part II 14},
  pages={362--373},
  year={2010},
  organization={Springer}
}

@misc{towell1992primate,
  title={Primate splice-junction gene sequences (DNA) with associated imperfect domain theory},
  author={Towell, GG and Noordewier, MO and Shavlik, JW},
  year={1992}
}

\end{document}